\begin{document}
\title{Exact Synthetic Populations for Scalable Societal and Market Modeling}
%
%
\author{
Thierry Petit \and
Arnault Pachot
}

\authorrunning{Thierry Petit \and Arnault Pachot}
%
\institute{
Emotia, France \\
27 rue Marbeuf, 75008 Paris. \\ STATION F, 5 Parvis Alan Turing, 75013 Paris.
\email{\{thierry.petit,arnault.pachot\}@pollitics.com}
}

\maketitle              
\begin{abstract}
We introduce a constraint-programming framework for generating synthetic populations that reproduce target statistics with high precision while enforcing full individual consistency. Unlike data-driven approaches that infer distributions from samples, our method directly encodes aggregated statistics and structural relations, enabling exact control of demographic profiles without requiring any microdata. We validate the approach on official demographic sources and study the impact of distributional deviations on downstream analyses. This work is conducted within the Pollitics project developed by Emotia, where synthetic populations can be queried through large language models to model societal behaviors, explore market and policy scenarios, and provide reproducible decision-grade insights without personal data. 
\keywords{Constraint Programming  \and Synthetic Populations \and Polls.}
\end{abstract}
\begin{sloppypar}
\section{Introduction}
This article presents an original method for Synthetic Population Generation (SPG) based on Constraint Programming (CP), designed to enforce both global distributional targets and individual-level coherence.

SPG is generally organized into two main families of methods \cite{chapuis2022survey}, Synthetic Reconstruction (SR), and Heuristic Combinatorial Optimization (HCO).

SR methods generate individuals by sampling attributes from marginal or reconstructed joint distributions. Classical SR includes iterative proportional fitting and iterative proportional updating schemes, and Monte Carlo based approaches that infer a joint distribution before sampling \cite{casati2015}. Markov models and probabilistic graphical models have also been used, including Bayesian networks \cite{sun2015}. Copula based methods reconstruct dependence structures \cite{jeong2016,jutrasdube2024}. Recent work applies deep generative models, such as autoencoders and variational autoencoders for rare feature combinations \cite{garrido2020}, and generative adversarial networks (GANs) for tabular synthesis surveyed in \cite{figueira2022}. Hybrid neural approaches with differentiable constraints have also been explored \cite{stoian2024}.

HCO methods construct synthetic populations by selecting or recombining individuals from a microdata sample so that the resulting distribution approximates known marginals. Foundational contributions formulate the task as an optimisation problem measured by a discrepancy between synthetic and target distributions \cite{williamson1998,voas2000}. Variants mainly differ in their search strategies, including hill climbing, simulated annealing, and genetic algorithms. 

SR and HCO methods typically require either microdata samples or training sets from which joint distributions can be learned, or alternatively detailed multiway contingency tables that are rarely fully available in public statistics. Rare contributions explore sample free optimisation starting from artificial individuals~\cite{barthelemy2012} but as far as we now, all rely on fitness functions defined with respect to target distributions and do not offer guarantee of internal coherence, since individual consistency emerges indirectly from optimisation or sampling rather than from explicit constraints.
Therefore, existing methods not well-suited to the operational setting of Pollitics, where we construct digital twins of individuals, companies, or training centers to support direct querying or economic simulations. First, we rarely have access to reliable, openly available real microdata. Second, the information we do have consists of precise statistics expressed as percentages that must be matched exactly. Third, we require a declarative approach to enforce strict individual level coherence (for example, no retired minors). Our synthetic agents are subsequently queried by Large Language Models to provide justifications and behavioural explanations, rather than being used solely through nomenclature mapping tables or analytical models.
 
CP offers a declarative framework in which structural constraints are enforced explicitly and systematically, independently of any optimisation objective. A CP based generator must nevertheless resolve several modelling challenges:
(i) The enforcement of marginal or joint statistical distributions such as age pyramids, gender ratios, or income brackets. (ii) The encoding of internal coherence rules for each individual and the optional promotion of value diversity in the population.\footnote{This constraint is optional because diversity for a given characteristic can be added during post processing, although including it directly in the model is usually easier}~(iii) The ability to scale to hundreds of individuals defined by multiple categorical attributes, which requires a model consistent with a batched solving strategy.
In this paper, we introduce and evaluate a novel CP driven method that addresses these challenges. Our code has been used in various contexts, including a polling MVP and an application that required generating more than 55000 synthetic individuals across 570 municipalities for fine grained territorial economic modelling.

\section{Constraint-based Population Generation}

Our approach formulates population generation as a constraint programming (CP) optimization problem. We construct a set of individuals from a declarative constraint model. These individuals jointly satisfy three types of constraints:

\begin{enumerate}
    \item \textbf{Exact compliance with target distributions.}  
    A global distribution constraint ensures that the generated population matches the given statistical targets exactly. 

    \item \textbf{Local logical coherence of individuals.}  
    Each individual must satisfy a set of local constraints that encode admissible combinations of attribute values, e.g., no retired chlid.

    \item \textbf{Optional structural constraints.}  
    Additional constraints (such as diversity requirements or structural restrictions on subgroups).
\end{enumerate}

We impose no restriction on the constraints attached to each individual, in particular we do not assume any tractable structural property of the underlying constraint hypergraph, such as Berge-acyclicity~\cite{DBLP:journals/constraints/BeldiceanuCDP05}.
The central difficulty is to scale. Our solution is to generate individuals by batches, which requires defining constraints that satisfy a suitable \textit{monotonicity} property so that feasibility is preserved as the population grows.
We use the following terminology: we consider a set of categorical \textit{features} (or \textit{attributes}) $\mathcal{F}$ indexed by $f \in \{1,\dots,F\}$.  
Each feature $f$ is associated with a finite domain $D_f$ representing its possible categories.
For a population of size $N$, the value of attribute $f$ for individual $i$ is represented by 
a CP variable $x^i_f$, and each such variable takes values in $D_f$. 
The set of variables 
\(
X_f = \{ x^1_f, \dots, x^N_f \}
\)
thus represents all instantiations of attribute $f$ across the population, while an individual is the tuple
\(
x^i = (x^i_1, \dots, x^i_F).
\)

\subsection{Matching Target Distributions: A Motivating Example}
In human surveys, the variance of an estimated proportion $\hat{p}$ (where $\hat{p}$ denotes the empirical frequency of a category, $p$ its true population proportion, and $n$ the number of respondents) follows the classical binomial form $v(\hat{p}) = p(1-p)/n$. Structural biases must often be corrected because many characteristics of the sample cannot be controlled. In contrast, synthetic populations can be constructed to match known distributions directly, rather than relying on large sample sizes $n$ to reduce variance.
To illustrate this property, consider two categorical attributes: age ($X_1$ with associated domain $D_1 = \{0,1,2,3\}$) and location ($X_2$, domain $D_2=\{0,1,2,3\}$), with the allocation shown in Table~\ref{tab:1} for $N=100{,}000$ individuals.
\begin{table}[h!]
\centering
\setlength{\tabcolsep}{6pt}
\scriptsize
\begin{tabular}{lccccc}
        & \multicolumn{5}{c}{$D_2$} \\
        &         & $0$ & $1$ & $2$ & $3$ \\ 
\midrule
\multirow{4}{*}{$D_1$} 
        & $0$ & 7000  & 15750 & 1750  & 10500 \\
        & $1$ & 7000  & 15750 & 1750  & 10500 \\
        & $2$ & 4000  & 9000  & 1000  & 6000  \\
        & $3$ & 2000  & 4500  & 500   & 3000  \\
\end{tabular}

\caption{Example of a categorical population distribution. }
\label{tab:1}
\end{table}

We consider a simple voting model where each individual chooses among A, B, or DK, with probabilities conditioned on age and location (Table~\ref{tab:two_side_by_side}). 
\begin{table}[htbp]
\centering
\setlength{\tabcolsep}{3pt} 
\scriptsize
\begin{minipage}[t]{0.48\textwidth}
\centering
\begin{tabular}{lcccc}
$X_1$ (age) & $0$ & $1$ & $2$ & $3$ \\
\midrule
vote\_A & 0.45 & 0.25 & 0.25 & 0.05 \\
vote\_B & 0.25 & 0.35 & 0.55 & 0.85 \\
vote\_DK & 0.30 & 0.40 & 0.20 & 0.10 \\
\end{tabular}
\end{minipage}
\hfill
\begin{minipage}[t]{0.48\textwidth}
\centering
\begin{tabular}{lcccc}
$X_2$ (location) & $0$ & $1$ & $2$ & $3$ \\
\midrule
vote\_A & 0.25 & 0.25 & 0.45 & 0.35 \\
vote\_B & 0.65 & 0.15 & 0.15 & 0.45 \\
vote\_DK & 0.10 & 0.60 & 0.40 & 0.20 \\
\end{tabular}
\end{minipage}

\caption{Vote probabilities conditioned on age (left) and location (right).}
\label{tab:two_side_by_side}
\end{table}

These ground-truth distributions yield reference proportions A=29.5\%, B=37.27\%, DK=33.25\%. We then simulate synthetic populations under different scenarios:  
(i) random assignment of $x^i_1, x^i_2$;  
(ii) matching only age marginals;  
(iii) matching only location marginals;  
(iv) matching both;  
(v) matching perturbed marginals within $\pm k\%$.
\begin{figure}[h]
\centering
\includegraphics[width=10cm]{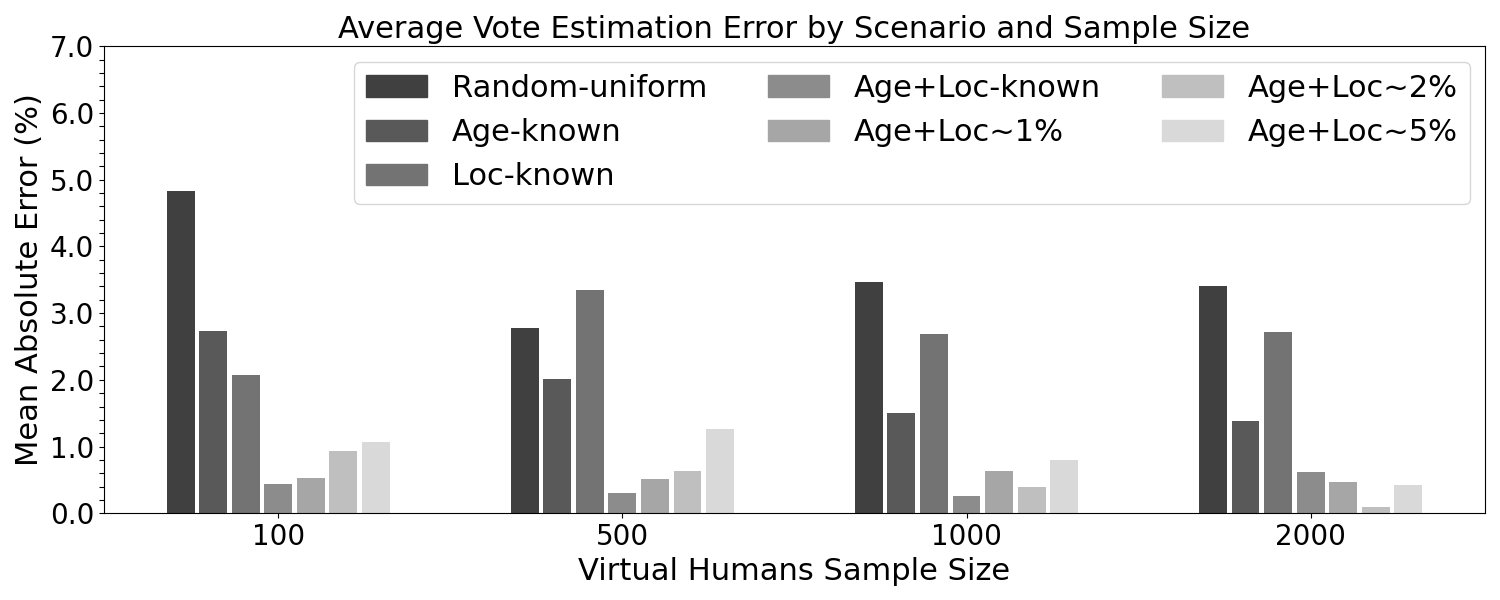}
\caption{Absolute error between estimated and true vote proportions (A, B, DK), as a function of sample size. Each group of three bars represents the error for vote A, B, and DK in a specific scenario, compared with the true results over $N$=100,000 individuals. 
}
\label{fig:vote_error}
\end{figure}

We assume a perfect ground-truth voting model: each individual casts a vote by sampling exactly from the conditional probabilities in Table~\ref{tab:two_side_by_side}. We compute each individual's voting probabilities as the arithmetic mean of the two values. Figure~\ref{fig:vote_error} shows that estimates are strongly biased when $x^i_1$ and $x^i_2$ do not follow their target distributions. Matching both marginals eliminates almost all error, and even slightly perturbed marginals yield highly accurate estimates.

\subsection{Batch-based Generation}
\subsubsection{Independent Features.}
We distinguish a special class of features, denoted \( \mathcal{F}_{\mathrm{indep}} \subseteq \mathcal{F} \), 
called \emph{independent features}. 
An independent feature is defined as one that is not subject to any constraints, neither globally nor through individual-level restrictions, such as persona first names.
Its values can be generated randomly from its domain before solving the constraint model, using a {sampling without replacement} strategy to promote diversity. 
\subsubsection{Generation Algorithm.}
We design a batch-based generation process where individual-level constraints are restricted to the variables of the current batch, whereas global feature constraints are stated over all known values of the corresponding features, thus incorporating the constants derived from previously generated batches. Recall that all individuals in a given population take their characteristic values from the same set of domains. 
\begin{definition}[Abstract Constraints]
We denote by $\mathcal{C}^\ast$ the set of \emph{abstract constraints} stated at the schema level 
such that:
\(
\mathcal{C}^\ast = 
\mathcal{C}^\ast_{\mathrm{feat}} \;\cup\; \mathcal{C}^\ast_{\mathrm{ind}}, 
\mathcal{C}^\ast_{\mathrm{feat}} \cap \mathcal{C}^\ast_{\mathrm{ind}} = \emptyset.
\)
\begin{itemize}
    \item \textit{Feature-level (vertical) constraints}: each $C^\ast \in \mathcal{C}^\ast_{\mathrm{feat}}$ 
    is instantiated on all variables of the features in its scope.
    \item \textit{Individual (horizontal) constraints}: each $C^\ast \in \mathcal{C}^\ast_{\mathrm{ind}}$ 
    is instantiated on a subset of variables of every individual.
\end{itemize}
\end{definition}

{\small
\begin{algorithm}[h]
\caption{Batch-Based Generation}
\KwIn{Nb. of batches $B$, batch size $n$, features $\mathcal{F}$, $\mathcal{C}^\ast$}
$\mathcal{F} \gets \mathcal{F} \setminus \mathcal{F}_{\mathrm{indep}}$ \;
$\mathcal{P} \gets \emptyset$\;
\For{$b = 1$ to $B$}{
    $\mathcal{M} \gets$ new constraint model\;
    \For{each individual $i = 1$ to $n$}{
        State $X = \{x^{1}_i, x^{2}_i, \dots, x^{f}_i\}$ in $\mathcal{M}$\;
        \For{each $C^\ast \in \mathcal{C}^\ast_{\mathrm{ind}}$}{
            Instantiate $C(Y)$ with $Y \subseteq \{x^{1}_i,\dots,x^{F}_i\}$\;
            Add $C(Y)$ to $\mathcal{M}$\;
        }
        Add constraint $[(x^{1}_i, \dots, x^{F}_i) \notin \mathcal{P}]$ to $\mathcal{M}$\;
    }
    \For{each $C^\ast \in \mathcal{C}^\ast_{\mathrm{feat}}$}{
        $Y \leftarrow \emptyset$\;
        \For{each feature $f$ involved in $C^\ast$}{
            $X_f \gets \{ x^{f}_1, \dots, x^{f}_n \} \cup \{ x^{f}_k \mid k \in \mathcal{P} \}$\;
            $Y \leftarrow Y \cup X_f$\;
            }
        Instantiate $C(Y)$\;
        Add $C(Y)$ to $\mathcal{M}$\;
    }
    $S \gets$ Solve $\mathcal{M}$\;
    $\mathcal{P} \gets \mathcal{P} \cup S$\;
}
\textbf{Return} $\mathcal{P}$\;
\end{algorithm}
}

\subsection{Feature-Level Constraints.}
The essential constraint for accurately simulating a population is the enforcement of 
target distributions on each feature. We call this set of global constraints the \textit{distribution constraints}. 
In our framework, other feature-level constraints  
can be regarded either as facilities provided to the user or as constraints for improving the solving process, 
as in most cases they can be handled in post-processing. For instance, a diversity constraint can be enforced 
after the generation, by randomly selecting individuals once the categorical distribution has been 
satisfied. We therefore refer to these additional constraints as 
\textit{optional feature-level constraints}. 

\subsubsection{Distribution Constraint.}
The \emph{distribution constraint} enforces the alignment of each feature with a 
target categorical distribution. This constraint is related to global cardinality constraints, 
which are widely studied in the literature~\cite{schmied2025gcccost}. However, our approach requires constraints that are enforced as best as possible through an optimization objective rather than satisfaction, similarly to cost function networks and soft constraints~\cite{allouche2015tractability2,CooperGSSZW10,vanHoeve2011overconstrained}.

In our setting, each target percentage associated with a category can be viewed as a bin with a desired fill level. During batched generation, every individual assigned to a category contributes one unit to the corresponding bin. Each bin has a hard upper bound equal to the batch size, but a soft target defined by the prescribed distribution. As a result, the solver naturally tends to allocate individuals so as to reduce the deviation from the target fill levels whenever underfilled bins remain, or to incur at most an additional unit of overflow per individual otherwise. As shown in the experimental section, a decomposition into primitive constraints is the most effective formulation for this purpose. This approach benefits directly from solver-level explanation and learning mechanisms, such as those implemented in OR-Tools CP-SAT~\cite{ortools:cp-sat}.

\begin{proposition}[Largest Remainder Rounding]
Let $p_1,\dots,p_q \in [0,100]$ be target percentages with $\sum_{i=1}^q p_i = 100$ and let $N \in \mathbb{N}$ be the total number of individuals.
Define the (real-valued) ideal allocations
\[
f_i \;=\; \frac{p_i}{100}\,N \quad (i=1,\dots,q),
\]
their integer parts $t_i^{(0)}=\lfloor f_i \rfloor$, and fractional parts $r_i = f_i - t_i^{(0)}$.
Let
\[
R \;=\; N - \sum_{i=1}^q t_i^{(0)} \;=\; \sum_{i=1}^q r_i, \quad\text{so that}\quad 0 \le R < q.
\]
Let $S \subseteq \{1,\dots,q\}$ be the indices of the $R$ largest $r_i$ (break ties arbitrarily), and set
\(
t_i \;=\; t_i^{(0)} + \mathbf{1}_{\{i \in S\}}.
\). 
Then $\sum_{i=1}^q t_i = N$ and $|t_i - f_i| < 1$ for all $i$.
\end{proposition}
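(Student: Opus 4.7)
My plan is to reduce both conclusions to elementary manipulations of the floor-plus-fractional decomposition, with the only delicate point being the strictness of the distance bound. The starting identity is $\sum_i f_i = (N/100)\sum_i p_i = N$, which follows directly from $\sum p_i = 100$. Decomposing $f_i = t_i^{(0)} + r_i$ immediately yields $R = N - \sum_i t_i^{(0)} = \sum_i r_i$, and from $r_i \in [0,1)$ together with $R$ being an integer one concludes $R \in \{0,1,\dots,q-1\}$, as stated.

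The sum conservation is then mechanical: by construction $|S| = R$, hence
\[
\sum_{i=1}^q t_i \;=\; \sum_{i=1}^q t_i^{(0)} + |S| \;=\; (N - R) + R \;=\; N.
\]
For the per-index bound I would split on membership in $S$. If $i \notin S$, then $t_i - f_i = -r_i$ and $|t_i - f_i| = r_i < 1$ is immediate. If $i \in S$, then $t_i - f_i = 1 - r_i$, and the claim reduces to the \emph{strict} inequality $r_i > 0$ for every index selected for rounding up.

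This last point is where I expect a genuine argument to be required, since the definition of $S$ does not a priori forbid selecting an index with $r_i = 0$; note that the case $R = 0$ is vacuous because $S$ is empty. I would settle the remaining case $R \geq 1$ by contradiction: assume the smallest fractional part $r_{i^\star}$ appearing in $S$ equals $0$. Because $S$ collects the $R$ largest fractional parts and every $r_j$ is nonnegative, all indices outside $S$ must also satisfy $r_j = 0$. The total $\sum_i r_i$ is therefore supported on the $R - 1$ remaining indices of $S$, each contributing less than $1$, which gives $\sum_i r_i \le R - 1 < R$. This contradicts $\sum_i r_i = R$ and forces $r_i > 0$ throughout $S$, yielding the strict bound $|t_i - f_i| = 1 - r_i < 1$.
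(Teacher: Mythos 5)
Your proof is correct and follows the same route as the paper's: the identity $\sum_i f_i = N$ gives the sum conservation via $|S| = R$, and the per-index bound is obtained by the same case split on membership in $S$. The one substantive difference is that you explicitly justify the strictness $1 - r_i < 1$ for $i \in S$, i.e.\ that no index with $r_i = 0$ can be among the $R$ largest fractional parts when $R \ge 1$ (your counting argument amounts to: since each $r_j < 1$ and $\sum_j r_j = R$, at least $R+1$ indices have positive fractional part). The paper's proof asserts $|t_i - f_i| = 1 - r_i < 1$ without addressing this point, so your version is slightly more careful on the only delicate step.
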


\begin{proof}
By construction, $\sum_i t_i^{(0)} \le N$ and $R = N - \sum_i t_i^{(0)}$ is an integer with $0 \le R < q$.
Adding $1$ to exactly the $R$ indices with largest $r_i$ yields
\[
\sum_{i=1}^q t_i \;=\; \sum_{i=1}^q t_i^{(0)} + R \;=\; N.
\]
Moreover, for each $i$, either $t_i = t_i^{(0)}$ giving $|t_i - f_i| = r_i < 1$, or $t_i = t_i^{(0)}+1$ giving $|t_i - f_i| = 1 - r_i < 1$.
\hfill $\square$
\end{proof}

\begin{definition}[Distribution Constraint]\label{def:distribution_constraint}
Let $\mathit{obj}_f$ be a variable and $X_f \in \mathcal{F}$ be the variable set of a feature discretized into $q$ disjoint bins 
$B_1,\dots,B_q$.  
Each $B_j$ is associated with a target percentage $p_j$ $(1 \le j \le q)$, from which we compute $t_j$ as the global target number of individuals to allocate in $B_j$ after generating all batches, using the largest remainder method. 

Let $e_j$ be the number of individuals already generated in $B_j$.  
For the current batch of size $n$, the constraint is expressed as:
\begin{align}
t_j^{\text{batch}} &= \max\bigl(0,\, t_j - e_j \bigr),
\tag{1} \\
b_{ij} &=
\begin{cases}
1 & \text{if } x_i^f \in B_j,\\
0 & \text{otherwise,}
\end{cases}
\tag{2} \\
\sum_{j=1}^q b_{ij} &= 1 \quad \forall i \in \{1,\dots,n\},
\tag{3} \\
c_j &= \sum_{i=1}^n b_{ij},
\tag{4} \\
\delta_j &= |\, c_j - t_j^{\text{batch}} \,|,
\tag{5} \\
\mathit{obj}_f &\geq \sum_{j=1}^q \delta_j.
\tag{6}
\end{align}

\noindent
\textbf{Explanations.}
\begin{enumerate}
    \item[(1)] $t_j^{\text{batch}}$ adjusts the global target $t_j$ to account for $e_j$.
    \item[(2)] $b_{ij}$ is a Boolean indicator: it equals $1$ if individual $i$ is placed in bin $B_j$, where $x_i^f$ denotes the value of feature $X_f$ for individual $i$.
    \item[(3)] Each individual must belong to exactly one bin.
    \item[(4)] $c_j$ counts the number of individuals assigned to bin $B_j$ in the current batch.
    \item[(5)] $\delta_j$ is the absolute deviation from the batch target $t_j^{\text{batch}}$.
    \item[(6)] Variable $\mathit{obj}_f$ is to be minimized: total deviation across all bins.
\end{enumerate}
\end{definition}

The constraint of Defintion~\ref{def:distribution_constraint} satisfies a property that allows to use it in a batch-based solving process.
For an assignment $S$ on $X$ and $Y \subseteq X$, we write $S[Y]$ for the projection of $S$ onto $Y$, defined by $S[Y](x)=S(x)$ for all $x\in Y$.
\begin{definition}[Extension-Preserving Optimality]
\label{def:extension_optimality}
Let $X$ be a set of variables and $\mathit{obj}$ a variable, and the problem
\([
\min\ \mathit{obj}\quad\text{subject to: }C(X)\text{ and }\mathit{obj}\ge C_{\mathit{obj}}(X),
]\)
where $C_{\mathit{obj}}$ is an expression over the variables in $X$
(e.g.\ $C_{\mathit{obj}}(X)=\sum_{j=1}^q \delta_j(X)$).  
Given $Y \subseteq X$, the problem is \emph{extension-preserving optimal on $Y$} if every optimal solution $S_1$ of the restricted problem on $Y$ can be extended to an optimal solution $S_2$ on $X$ such that $S_1[Y]=S_2[Y]$.
\end{definition}

\begin{proposition}
Let $X$ be the set of variables involved in the distribution model of Definition~\ref{def:distribution_constraint} for a batch of $N$ individuals, and let $Y\subseteq X$ be the subset corresponding to a sub-batch of $M<N$ individuals. The decomposition of the distribution constraint is extension-preserving optimal on $Y$ in the sense of Definition~\ref{def:extension_optimality}.
\end{proposition}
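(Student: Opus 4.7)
My plan is to rewrite the objective $\sum_{j=1}^q \delta_j$ as a function of total overshoot and undershoot, derive the minimum value on any batch of $k$ individuals, and then construct an explicit extension by a short case analysis. Concretely, I would introduce $T = \sum_{j=1}^q t_j^{\text{batch}}$ and, for any feasible assignment $S$ of $k$ individuals with bin counts $c_1,\dots,c_q$, the quantities $A(S)=\sum_j \max\!\bigl(0,\,c_j - t_j^{\text{batch}}\bigr)$ and $B(S)=\sum_j \max\!\bigl(0,\,t_j^{\text{batch}} - c_j\bigr)$. Since $\sum_j c_j = k$ one has $A(S)-B(S)=k-T$, hence
\[
\sum_{j=1}^q \delta_j \;=\; A(S) + B(S) \;=\; 2\min\!\bigl(A(S), B(S)\bigr) + |k - T|.
\]
This identity shows that the minimum of the distribution objective on any batch of $k$ individuals equals $|k-T|$, and is reached exactly by assignments with $A(S)=0$ (every $c_j \le t_j^{\text{batch}}$) when $k \le T$, and by assignments with $B(S)=0$ (every $c_j \ge t_j^{\text{batch}}$) when $k \ge T$.

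I would then fix an optimal $S_1$ on $Y$ with bin counts $c_j^M$, observe that the targets $t_j^{\text{batch}}$ are constants inherited from earlier batches and therefore identical for $X$ and $Y$, and construct $S_2$ by specifying the bin of each of the remaining $N-M$ individuals, distinguishing cases on the position of $T$ relative to $M$ and $N$. When $M \le N \le T$, the characterisation gives $c_j^M \le t_j^{\text{batch}}$ with total slack $T-M \ge N-M$, so each of the $N-M$ new individuals can be placed in a bin still strictly below its target, producing an extension with no overshoot and deviation $T-N$. When $T \le M < N$, the characterisation gives $c_j^M \ge t_j^{\text{batch}}$ and the new individuals may be sent to arbitrary bins without creating undershoot, yielding deviation $N-T$. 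When $M \le T \le N$, I would first top each bin from $c_j^M$ up to $t_j^{\text{batch}}$ (consuming exactly $T-M$ individuals) and then distribute the remaining $N-T$ individuals arbitrarily, again yielding deviation $N-T$. In every case constraint~(3) holds by construction, $S_2[Y]=S_1$, and the value $|N-T|$ of $\sum_j \delta_j$ attained by $S_2$ is the global minimum on $X$, so $S_2$ is an optimal extension of $S_1$.

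The main obstacle is keeping the case distinction clean: in a non-final batch the targets $t_j^{\text{batch}}$ generally sum to $T > N$, so one must treat both the undershoot-only and the overshoot-only regimes together with the intermediate case in which the sub-batch is below target while the full batch is at or above target. Once the identity relating the objective to $\min(A,B)$ and $|k-T|$ is in place, the constructive part reduces in each case to distributing a non-negative number of individuals into bins with non-negative residual capacities whose sum exactly matches the required count, which is always feasible.
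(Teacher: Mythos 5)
Your proof is correct, and it takes a genuinely different — and substantially more rigorous — route than the paper's. The paper's argument is a two-sentence proof by contradiction: it asserts that any globally better assignment on $X$ ``would induce a strictly smaller value of $\sum_j |c_j(Y)-t_j|$ on $Y$,'' without justifying that implication (the two objectives count different numbers of individuals against the same targets, so the implication is not immediate as stated). Your approach instead computes the optimum in closed form: the identity $\sum_j \delta_j = 2\min\bigl(A(S),B(S)\bigr) + |k-T|$ shows the minimum on any batch of $k$ individuals is exactly $|k-T|$, characterizes the optima as the overshoot-free or undershoot-free assignments, and then exhibits an explicit extension of any optimal $S_1$ achieving the global minimum $|N-T|$. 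This buys you both the existence of the extension (which the paper's contradiction argument never actually constructs — and Definition~\ref{def:extension_optimality} requires extendability, not merely that restrictions of global optima are optimal) and a quantitative statement of what the optimal deviation is in each batch. The only implicit assumption worth making explicit is that each bin $B_j$ corresponds to a nonempty subset of the feature domain and that no other constraints restrict the $x^i_f$, so that every remaining individual can indeed be routed to any bin with residual capacity; this matches the paper's caveat that the property holds for the distribution constraint in isolation. With that noted, your case analysis ($N\le T$, $T\le M$, and $M\le T\le N$) is exhaustive and each construction is feasible because the residual capacities are non-negative and sum to the required count.
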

\begin{proof}
For $Y$, the 
$\delta_j(Y)$'s are computed with respect to the same global targets $t_j$ as
in the full problem on $X$ with $N$ individuals. An optimal solution on
$Y$ minimizes $\sum_j |c_j(Y)-t_j|$.
Since $\sum_{j=1}^q b_{ij} = 1$ (Definition~\ref{def:distribution_constraint}), any global improvement of the objective must come
from a strictly better combination of bin counts with respect to the same
targets $t_j$. Therefore, any global solution with strictly smaller objective would induce a strictly
smaller value of $\sum_j |c_j(Y)-t_j|$ on $Y$, contradicting the optimality
of the solution on $Y$. \hfill $\square$
\end{proof}
The extension-preserving property holds for the distribution constraint in
isolation, but individual coherence constraints and additional global
constraints (such as diversity) may alter this behaviour. We measure
this impact empirically in the experimental section.

\subsubsection{Optional Feature-Level Constraints.}
Although our system does not limit the global constraints that can be implemented on 
features, we implemented a declarative API in which 
all constraints are expressed through a JSON specification. 
This format supports several \emph{optional feature-level constraints}, 
including the classical \textit{AllDifferent} constraint~\cite{reg94}, and a dedicated \textit{diversity constraint}. 
We focus on describing this diversity constraint, derived from 
CP existing approaches~\cite{Hebrard2005,Hebrard2007,PetitTrapp2019,IngmarDeLaBandaStuckeyTack2020}. 
This constraint is useful when no specific distribution is known for a feature, such as first names, 
or preferences from which no statistical data is available. 
\begin{definition}[Diversity Constraint]\label{def:diversity_constraint}
Let $X_f$ be a feature, and let $I_{\mathrm{exist}}$ and $I_{\mathrm{batch}}$ be
the index sets of already generated individuals and of the current batch,
respectively. For any distinct $i_1,i_2 \in I_{\mathrm{exist}} \cup I_{\mathrm{batch}}$
with $i_1<i_2$, define
\[
b_{i_1i_2} =
\begin{cases}
0 & \text{if } x^f_{i_1} \ne x^f_{i_2},\\
1 & \text{if } x^f_{i_1} = x^f_{i_2}.
\end{cases}
\]
The diversity objective is expressed as
\[
\mathit{obj} \;\ge\;
\sum_{\substack{i_1<i_2\\ i_1,i_2 \in I_{\mathrm{exist}} \cup I_{\mathrm{batch}}}} b_{i_1i_2}.
\]
\noindent
\textbf{Explanation.}
Minimizing $\mathit{obj}$ penalizes identical feature values.
\end{definition}
\begin{proposition}
Let $X$ be the set of variables involved in the diversity constraint of
Definition~\ref{def:diversity_constraint} for a batch of $N$ individuals,
and let $Y \subseteq X$ be the subset corresponding to a sub-batch of
$M < N$ individuals.  
The diversity constraint is extension-preserving
optimal on $Y$ in the sense of
Definition~\ref{def:extension_optimality}.
\end{proposition}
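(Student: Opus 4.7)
The plan is to recognise the diversity objective as a separable convex function of the value-count vector, and to lean on the classical greedy optimality of the associated balls-into-bins problem. Letting $m_v$ denote the number of individuals in $I_{\mathrm{exist}}\cup I_{\mathrm{batch}}$ assigned value $v\in D_f$, one has $\sum_{i_1<i_2} b_{i_1i_2} = \sum_{v\in D_f}\binom{m_v}{2}$. Splitting $m_v = c_v^E + c_v^Y + c_v^R$ according to the contributions from $I_{\mathrm{exist}}$, from $Y$, and from $X\setminus Y$, both the restricted problem on $Y$ (with $M$ sub-batch variables) and the full problem on $X$ (with $N$ variables) fit the same template: add $K$ units to bins pre-loaded with the fixed counts $c_v^E$, so as to minimise $\sum_v \binom{c_v^E + k_v}{2}$, with $K=M$ in one case and $K=N$ in the other.

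First I would invoke the standard optimality of the greedy rule ``always place the next unit into a currently lightest bin'' for this convex-separable allocation, together with the fact that every greedy execution of length $K$ terminates with the same sorted profile of final loads, which is therefore the global optimum up to tie-breaking. The proof then reduces to the following claim: every sub-batch optimum $S_1$ can be realised as a greedy execution of length $M$. Once this is granted, extending by $N-M$ further greedy placements yields a length-$N$ greedy run, hence an assignment $S_2$ that is globally optimal on $X$ and whose first $M$ placements reproduce $S_1$, giving $S_2[Y]=S_1[Y]$.

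The hard part will be proving this prefix realisability. I plan a contradiction argument on the incremental reconstruction of $S_1$. Suppose that at some stage $k<M$ no value $v$ with $c_v^{S_1}$ placements still to make sits at the current global minimum load, which is then held at some $v^*$ with $c_{v^*}^{S_1}=0$ (so $v^*$ already carries its final load $c_{v^*}^E$). For any $v$ still owed balls, $n_v^{(k)} \ge c_{v^*}^E + 1$. Combining this with the single-unit exchange characterisation of sub-batch optimality, which gives $n_v^{\mathrm{end}} \le \min_{v'} n_{v'}^{\mathrm{end}} + 1 \le c_{v^*}^E + 1$ for every $v$ with $c_v^{S_1}>0$ (an immediate consequence of the convexity of $\binom{\cdot}{2}$ under a single-ball swap), one obtains $n_v^{(k)} = n_v^{\mathrm{end}}$, so $v$ had in fact no ball left to place, contradicting the assumption that $v$ was still owed balls.

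Once realisability is established, continuing greedy for $N-M$ further steps produces a load profile matching that of any length-$N$ greedy run, hence globally optimal, and the resulting $S_2$ satisfies $S_2[Y]=S_1[Y]$ by construction, which is exactly the extension-preserving property required by Definition~\ref{def:extension_optimality}. An alternative route would be a direct exchange argument showing the existence of a full optimum $m_v^*$ with $m_v^* \ge c_v^E + c_v^{S_1}$ componentwise, but formalising this dominance seems to require essentially the same reconstruction lemma, so I would prefer the greedy framing above.
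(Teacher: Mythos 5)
Your proof is correct, but it takes a genuinely different and considerably more rigorous route than the paper. The paper's own argument is a short independence claim: it asserts that the $b_{i_1 i_2}$ involving indices in $X\setminus Y$ are independent of the variables indexed in $Y$, and that one can therefore extend any optimum on $Y$ ``optimally'' without affecting $Y$'s contribution. That assertion is actually not literally true for the cross pairs $(i_1,i_2)$ with $i_1\in Y$ and $i_2\in X\setminus Y$, whose indicators do depend on the values chosen in $Y$; the paper leaves the key step --- why the extension attains the \emph{global} optimum --- essentially unjustified. Your reformulation as minimizing the separable convex function $\sum_v \binom{c_v^E+k_v}{2}$ over integer allocations summing to $K$ supplies exactly that missing justification: greedy optimality of the convex allocation handles global optimality, and your prefix-realisability lemma (every sub-batch optimum is reachable as a length-$M$ greedy run, proved by the single-ball exchange characterisation $n_v^{\mathrm{end}} \le \min_{v'} n_{v'}^{\mathrm{end}}+1$) is the genuine technical content that makes the extension argument go through. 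One small imprecision: at the hypothetical blocking stage, the minimum-load bin $v^*$ need not satisfy $c_{v^*}^{S_1}=0$; it may simply have already received all of its owed balls. This does not affect your argument, which only uses that $v^*$ sits at its final load, but the phrasing should be adjusted. In short, what the paper buys with its approach is brevity at the cost of a gap; what you buy is a complete proof, at the cost of importing the classical machinery of greedy allocation for separable convex objectives.
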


\begin{proof}
The diversity objective is $\mathit{obj}_f \ge \sum_{i_1<i_2} b_{i_1 i_2}$,
where each $b_{i_1 i_2}$ depends only on the pair of individuals
$(i_1,i_2)$ considered. The variables $b_{i_1 i_2}$ involving indices in
$X\setminus Y$ are independent of all variables indexed in $Y$. 
As all variables in $X_f$ have the same initial domain,
any optimal assignment on $Y$ can be extended by optimally choosing the
$b_{i_1 i_2}$ for pairs in $X\setminus Y$, without affecting the objective
contribution of $Y$. Such an extension attains the global optimum.
\hfill$\square$
\end{proof}

\subsubsection{Handling Interdependent Distributions.}
In many applications, features exhibit statistical dependencies. For example,
voting intentions may vary across age categories, so that the target
distribution of a ``vote'' feature $X_g$ depends on the category of an ``age''
feature $X_f$. Two modelling strategies can be used.

\begin{itemize}
    \item \textit{Two-phase generation (sequential case).}
    If the dependency structure between features is acyclic and known (e.g.\ the
    distribution of $X_g$ depends only on $X_f$, or more generally if a
    topological ordering exists), we first generate all individuals by
    enforcing the distribution constraint on the parent features. For each
    category $v$ of a parent feature $X_f$, we then generate the dependent
    feature $X_g$ in the subpopulation $\{i : x^i_f = v\}$ using its own
    distribution constraint.
    
    \item \textit{Joint generation (cyclic or mutually dependent case).}
    When several features mutually constrain each other, we introduce a
\emph{joint feature} whose domain is the Cartesian product of their domains:
\[
D_{\mathrm{joint}}
= \{(v_{f_1},\dots,v_{f_F}) \mid v_f \in D_f\}.
\]
A single distribution constraint is then applied to this joint feature.

\end{itemize}

When detailed joint statistics for $D_{\mathrm{joint}}$ are available (e.g.\
cross-tabulated survey or census data), each joint category
$(v_{f_1},\dots,v_{f_F})$ is assigned its target percentage.  
Otherwise, only the known joint statistics are assigned, and an approximate
distribution for the remaining combinations can be constructed from the
available marginals using a domain-specific rule or an independence
assumption, which must be explicitly stated.  
In all cases, target counts for the joint categories are obtained using the
largest remainder method, and the standard distribution constraint of
Definition~\ref{def:distribution_constraint} is applied to the joint feature.

\subsection{Individual Constraints}
The constraints ensuring the internal consistency of each generated individual typically 
express dependencies or interactions between attributes, 
such as a relation between age and professional activity, or between a city and its administrative department. 
Formally, they can be modeled using standard logical, Boolean, 
and table constraints for specifying the set of allowed or forbidden tuples. 

\section{Applications}

\subsection{Synthetic Population Generation}
The experiments were run on an Apple M2 Pro (16 GB RAM, macOS Ventura 13.5) with Python 3.11 and OR‑Tools 9.14.
To interpret these experiments properly, it is important to consider the intended use case of the framework, designed for SPG on demand, through JSON specifications possibly dynamically refreshed. We fixed a 30-second solving time limit to ensure responsive user-interaction. 

\paragraph{Trade-offs in Constrained Generation.}
This experiment evaluates how the presence and structure of individual constraints affects the system's ability to match global targets. 
The MAPE is calculated as follows: for each feature \( f \) with a target distribution defined by \( k \) bins, the target percentages \( p_i \) (where \( i = 1, \ldots, k \) and \( \sum p_i = 100\% \)) are compared against the actual percentages \( q_i \) derived from the generated population. The MAPE is the mean of the absolute errors \( |p_i - q_i| \), expressed as a percentage. To handle cases where target percentages might be zero (avoiding division by zero), we use absolute difference normalized by the total count, averaged across bins:
\(
\text{MAPE}_f = \frac{100}{n} \sum_{i=1}^{n} |p_i - q_i|.
\)
The MAPE for an instance is the average of \( \text{MAPE}_f \) across all features. 

The experiment generates synthetic instances of 100 individuals with 15 features per individual, each with a distribution constraint on a number of beans ranging from 2 to 15, randomly assigned per instance and feature. The number of individual constraints ranges from from 4 to 20, incremented by 4 (i.e., [4, 8, 12, 16, 20]).
The proportion of forbidden tuples in each constraint, set to [10\%, 20\%, 30\%]. While it might be possible for one constraint to lie near the phase transition, considering multiple compatibility constraints all positioned at this critical hardness is not representative of practical cases of human populations. 
Figure~\label{fig:combined} shows two graphs that plot average MAPE against batch size (in [1, 5, 10, 50]) in the generation process, one for binary individual constraints (two features) and one for ternary individual constraints (three features). 

\begin{figure}[h]
    \centering
    \begin{subfigure}[b]{0.48\textwidth}
        \includegraphics[width=\textwidth, height=0.23\textheight]{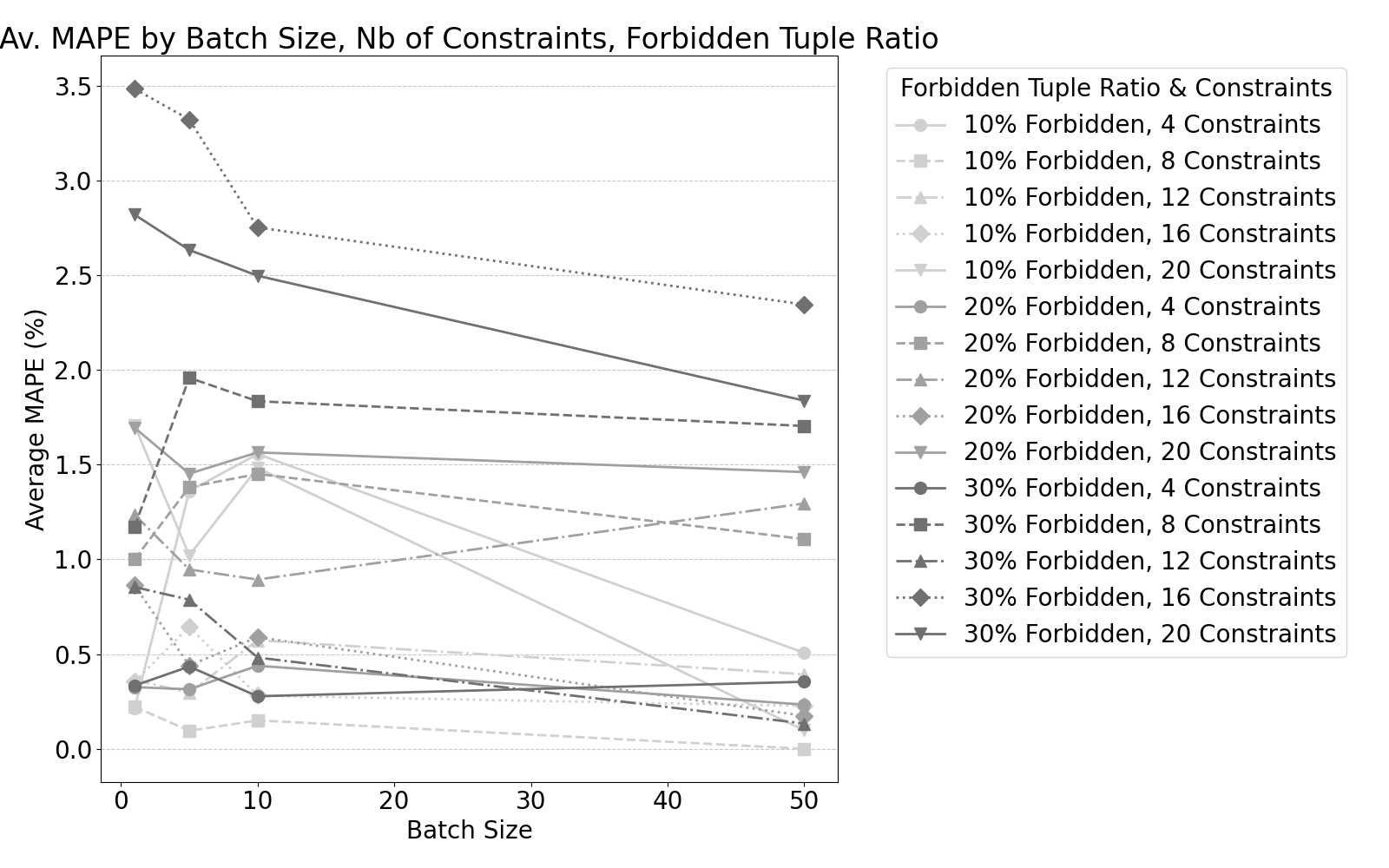}
        \caption{Binary Constraints}
        \label{fig:binary}
    \end{subfigure}
    \hfill
    \begin{subfigure}[b]{0.48\textwidth}
        \includegraphics[width=\textwidth, height=0.23\textheight]{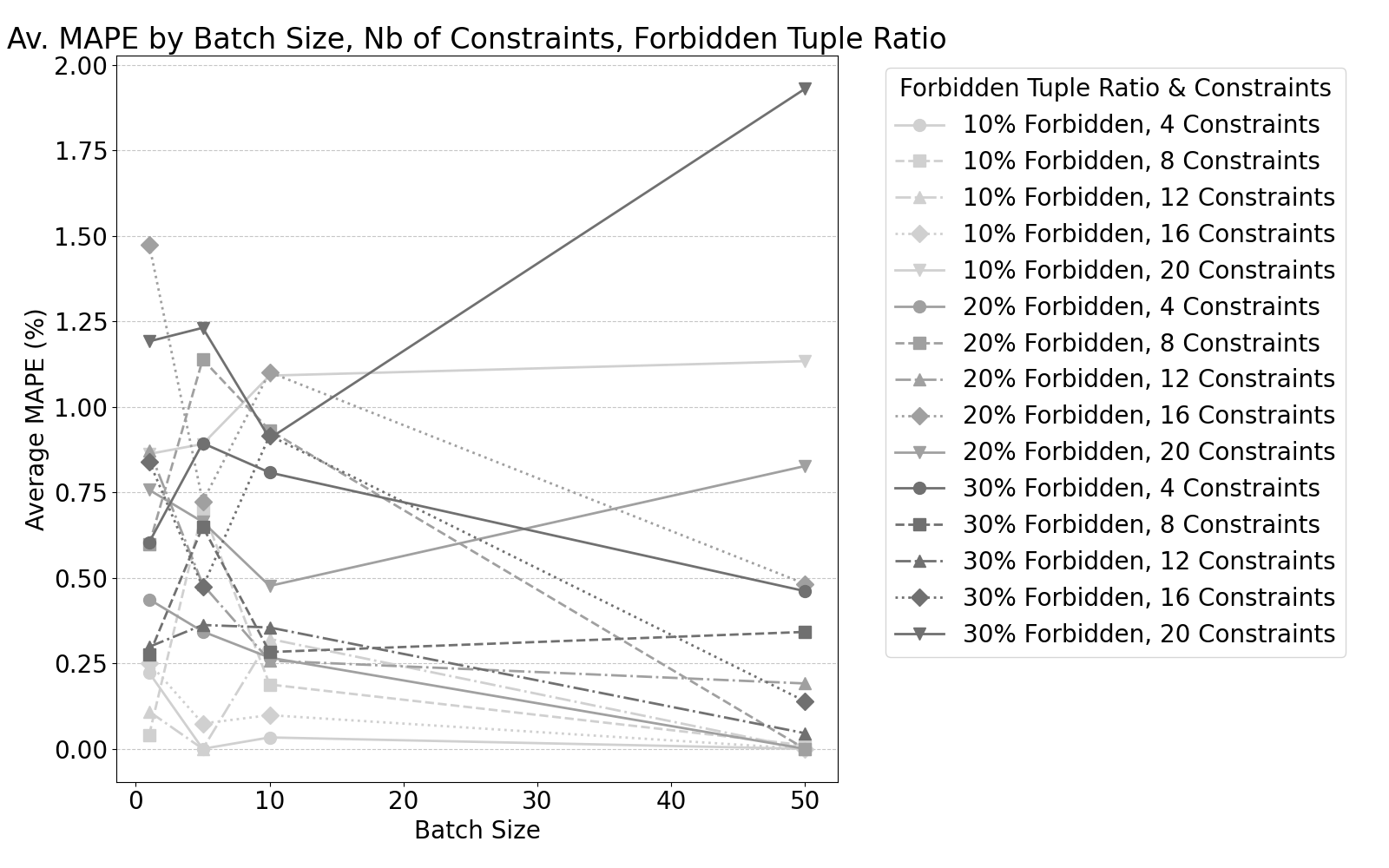}
        \caption{Ternary Constraints}
        \label{fig:ternary}
    \end{subfigure}
    \caption{Average MAPE by batch size, number of constraints and tightness.}
    \label{fig:combined}
\end{figure}

Both plots reveal the trade-off between constraint enforcement and distribution accuracy: higher constraint counts logically tend to increase MAPE, particularly at smaller batch sizes, indicating greater difficulty in matching global targets. 
Increasing values for the largest batch size in the ternary case are due to the imposed 30-second resolution limit (with a greater time limit we observed the same trend as for other instances).

To complete the experiment, we build on the setup of Trade-offs in Constrained Generation to assess scalability and the effect of batch size on performance. Table~\ref{tab:timings} shows the execution time per batch for the median case of 12 constraints and 20\% of forbidden tuples. 
\begin{table}[h!]
\scriptsize
\centering
\setlength{\tabcolsep}{4pt}
\renewcommand{\arraystretch}{1.1}
\begin{tabular}{cccc}
\textbf{Pop. Size} & \textbf{Batch Size} & \textbf{Av. Time (s)} & \textbf{\% Optimal batches} \\
\midrule
100   & 1  & 0.60 & 100.0 \\
100   & 5  & 0.73 & 100.0 \\
100   & 10 & 0.80 & 100.0 \\
1000  & 1  & 4.59 & 100.0 \\
1000  & 5  & 4.76 & 100.0 \\
1000  & 10 & 4.85 & 100.0 \\
5000  & 1  & 12.55 & 100.0 \\
5000  & 5  & 15.12 & 99.94 \\
5000  & 10 & 19.85 & 98.02 \\
\end{tabular}
\caption{Average runtime per batch and percentage of locally optimal batches, for different population and batch sizes.}
\label{tab:timings}
\end{table}

We observe a moderate constant multiplicative increase in runtime as the population size grows. 5000 individuals is the limit for solving all  batches to their local optimum in less than 30 seconds when the batch size is in [1,10]. 
\paragraph{ District Demographic Simulation.}
We evaluate our method on five interdependent demographic features for the
Mulhouse district (France): age, area, gender, employment, and political
orientation. Target distributions come from INSEE (age, gender, employment,
area) and electoral sources (ideology). The categorical domains are as follows:
age has seven groups (0--18, 19--30, 31--40, 41--50, 51--65, 66--75, 76+);
area contains seven municipalities (Mulhouse, Riedisheim, Illzach,
Brunstatt-Didenheim, Pfastatt, Sausheim, Lutterbach); gender has three
categories (male, female, other); employment has five (student, unemployed,
retired, employed, self-employed/director); ideology has five (left,
center-left, center-right, right, unknown).

Two individual constraints are enforced: individuals aged 0--18 cannot be
retired, and their political orientation must be ``unknown''. Inter-feature
dependencies are represented using composite features treated as new
variables with their own target distributions: $(X_2,X_1)$ (age by area),
$(X_1,X_4)$ (age by employment), and $(X_3,X_5)$ (gender by ideology).
Consistency between base and composite variables is enforced through
allowed-tuple relations.

\begin{table}[h]
\scriptsize
\centering
\begin{tabular}{l|c|c|c|c|c|c|c|c}
\textbf{MAPE (\%)} 
& $X_1$ & $X_2$ & $X_3$ & $X_4$ & $X_5$ 
& $(X_1,X_4)$ & $(X_2,X_1)$ & $(X_3,X_5)$ \\
\midrule
Value 
& 4.9 & 0.0 & 0.0 & 0.0 & 0.0 
& 0.0 & 0.0 & 0.0 \\
\end{tabular}
\caption{Mean Absolute Percentage Error (MAPE) for each feature.}
\label{tab:mape}
\end{table}

All distributions are matched exactly except for age ($X_1$), which carries
the strongest set of constraints. The generated population remains close to
the real district structure. Deviations between target and generated
distributions may also reveal inconsistencies in the input statistics,
suggesting a secondary use of our framework for validating cross-distribution
coherence.

\subsection{Application Usage}
\subsubsection{Virtual Polling}

We developped Pollitics (\url{pollitics.com}), a virtual polling platform that replaces human survey panels with synthetic populations generated from official demographic statistics. 

Each virtual individual is queried independently through a Large Language Model, producing aggregated results that mimic traditional opinion polls. National populations for dozens of countries have been generated using publicly available demographic marginals (age, gender, region, etc.) from national statistical institutes. The two major advantages of using synthetic populations stem from statistical data are the absence of privacy concerns and the ability to iterate as many times as required, with new questions or specific populations. 

\begin{figure}[h]
\centering
\includegraphics[width=0.6\linewidth]{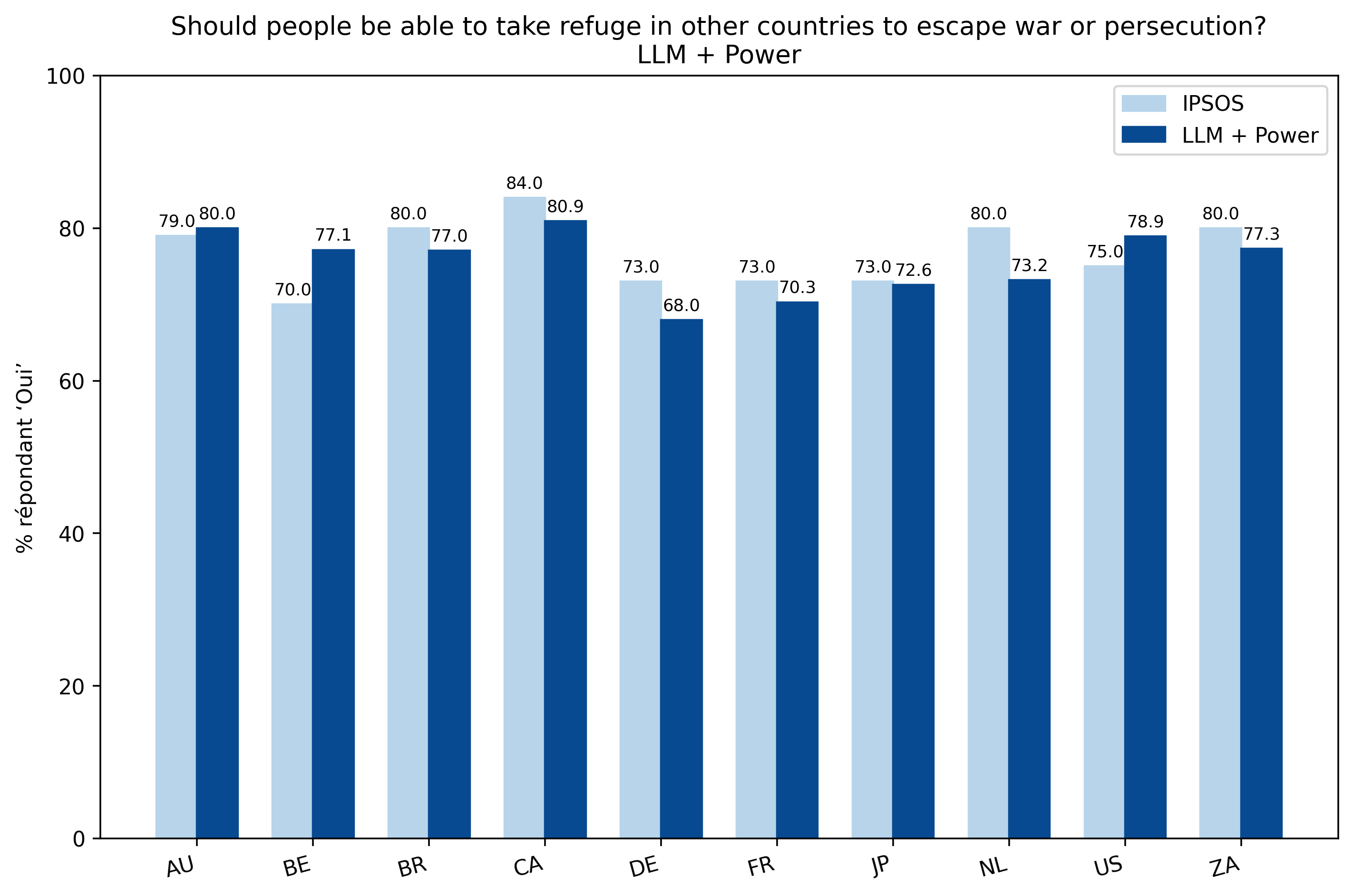}
\caption{Comparison between human IPSOS polling results and virtual polling
results obtained with synthetic populations queried via LLMs (“LLM + Power”)
on the question: \emph{“Should people be able to take refuge in other
countries to escape war or persecution?”}.}
\label{fig:refugee_rights}
\end{figure}

Once a population is instantiated, Pollitics asks each synthetic individual
the user’s question and aggregates their answers in real time. In this
application, questions are binary. For each respondent, an LLM first produces
three probabilities \(p_{\text{yes}}, p_{\text{no}}, p_{\text{dk}}\); a vote is then
sampled accordingly. Different aggregation modes can be applied.
(1) \emph{Raw mode} simulates each ballot directly from
\(p_{\text{yes}}\), with an optional rule counting cases with
\(p_{\text{yes}}>0.5\) as ``yes.'' 
(2) \emph{Gumbel mode} injects Gumbel noise into the logits of the three
probabilities, modelling the stochastic variability observed in human
respondents, who do not always act deterministically given their latent
preferences. 
(3) \emph{Power mode} sharpens the distribution by applying an exponent
\(\alpha>1\) to the probabilities below and above a given threshold, reinforcing decisive preferences and
mitigating the dampening of strong opinions induced by LLM safety filters, an
effect also noted in recommendation settings~\cite{Sinacola2025}. 
These transformations may be selected or combined depending on the desired
simulation behaviour.

For broad, non–time-sensitive questions, the system has empirically shown results close to human polls, suggesting that well-structured synthetic populations combined with LLM-based reasoning can approximate real survey outcomes. An illustrative example is shown in Figure~\ref{fig:refugee_rights} (see \url{https://pollitics.com/foundations} for other examples). 

\subsubsection{Territorial Economic Intelligence}

Beyond aggregate national polling, we built a scalable intelligence tool for French territorial and industrial monitoring based on synthetic populations. It is designed to continuously ingest new data sources and incorporate up-to-date contextual information in real time. 

In this setting, synthetic populations are generated for 570 communes in the Lyon--Saint-Étienne--Roanne area (about 100,000 digital twins), using public INSEE statistics, for  total of one hundred thousands individuals. The process combines the \emph{dossier\_complet} INSEE dataset (see \url{https://www.insee.fr/fr/statistiques/zones/2011101}), communal reference files, electoral data (\url{https://data.gouv.fr}), labour-market indicators, household and housing statistics, and cross-tabulated demographic attributes (age~$\times$~gender, formation~$\times$~gender, activity, employment categories). These distributions are automatically extracted, normalised, with a particular care on interdependent data to generate faithful synthetic inhabitants. A similar strategy is used to model \emph{legal entities}, where synthetic companies are instantiated from aggregated economic and employment indicators, including datasets provided by France Travail, enabling simulations of industrial dynamics.
\begin{figure}[h]
\centering
\includegraphics[width=1\linewidth]{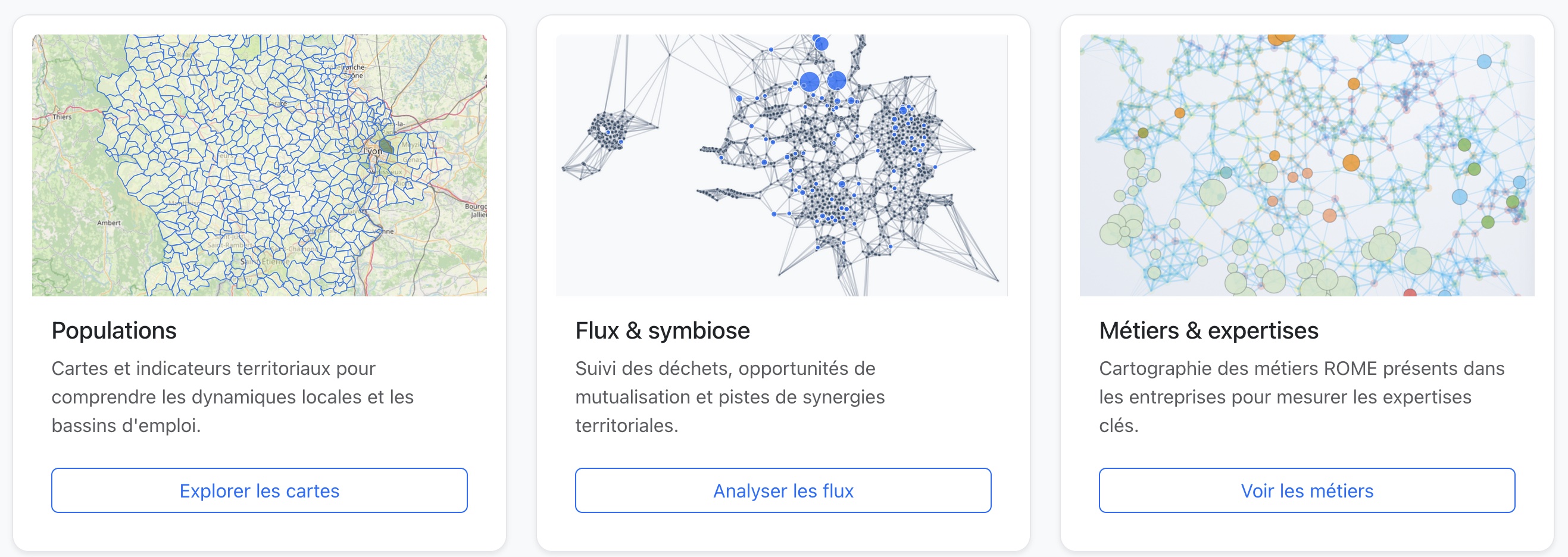}
\caption{French territorial economic intelligence tool: synthetic populations of Lyon--Saint-Étienne--Roanne districts mapped to local economic indicators.}
\label{fig:territorial_econ}
\end{figure}

These territorial populations enable applications in local economic analysis, including the estimation of a company's local image, perceived industrial attractiveness, and attitudes toward heritage preservation or circular-economy initiatives. Each virtual inhabitant represents a statistically grounded profile; the system can therefore simulate local sentiment or behavioural responses by interrogating individuals one by one through an LLM, just as in virtual polling.

The overall approach yields a granular, traceable, and updatable territorial intelligence layer, capable of supporting market studies, public-policy diagnostics, and strategic planning across hundreds of communes.

\subsubsection{AI-Driven Text Evaluation \& Message Optimization}

We used synthetic populations for AI-driven evaluation of written communication, enabling users to analyse, compare, and optimise messages for specific audiences. The system leverages large language models to assess clarity, tone, persuasion strength, emotional resonance, and potential misinterpretations. Each message is processed through a multi-criteria evaluation pipeline that provides structured feedback, highlighting strengths, weaknesses, and opportunities for improvement.

\begin{figure}[h]
\centering
\includegraphics[width=1\linewidth]{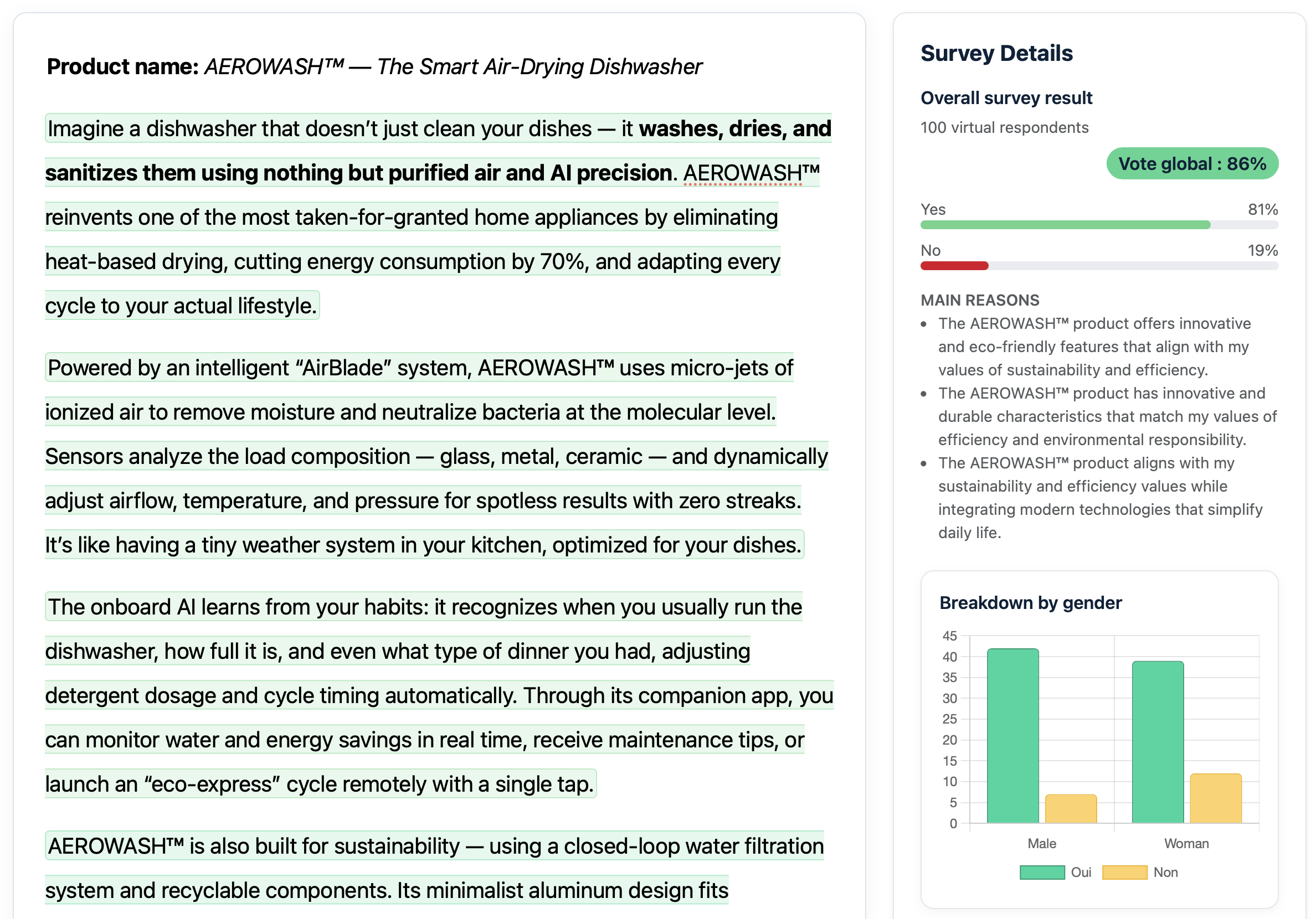}
\caption{Illustrative example of AI-driven text evaluation: multi-criteria scoring and optimisation suggestions for a target audience.}
\label{fig:text_evaluation}
\end{figure}

 Messages can be evaluated not only in absolute terms, but also \emph{as perceived by selected demographic groups}. By querying virtual individuals one by one, the system estimates how different audiences (e.g.\ young adults, retirees, industrial workers, local residents) would react to a proposed text. This enables message optimization tuned to specific territories, market segments, or socio-economic profiles.
The result is a versatile tool for refining public communication, marketing materials, policy messaging, and stakeholder engagement strategies, with rapid feedback loops and reproducible evaluations.

\section{Discussion and Conclusion}
This work introduced a Constraint Programming framework for generating exact synthetic populations. Although the model is entirely driven by aggregated statistics and does not require microdata, each individual is locally coherent thanks to explicit logical constraints, yielding structured populations that are simultaneously distribution-accurate and semantically valid. This property constitutes a distinctive property in the SPG literature, allowing us to cover a broad range of new applications. The method scales to thousands of individuals and supports a broad range of applications, including national virtual polling, territorial economic intelligence, and AI-driven text evaluation for human and corporate digital twins. 

More broadly, the results illustrate the complementary roles of constraint programming and large language models. LLMs cannot replace optimization approaches capable of tackling problems of combinatorial nature. However, they excel at synthesizing heterogeneous information, extracting weak signals, and enabling natural-language interaction with synthetic agents. This paper demonstrates a productive synergy between leveraging LLMs for the tasks they are genuinely well suited to and relying on the formal guarantees offered by a constraint-reasoning approach.

From an ethical perspective, it is essential to emphasize that such systems are decision-support tools and must not be treated as substitutes for human judgment. At the same time, they offer important positive externalities: they enable qualitative and quantitative studies for organizations that could not otherwise afford them, allow large-scale simulations with no privacy risk since no generated individual corresponds to a real person, and make it possible to represent and query minority groups that are statistically important but extremely hard to reach in traditional surveys. This provides a principled way to explore viewpoints that are structurally under-sampled in human polling. Finally, we emphasize that all questions submitted to our system are filtered through a dedicated ethical-safety layer, described in~\cite{petit2025icmlc}, ensuring that inappropriate, discriminatory, or harmful queries cannot be processed. 

As future work, we plan to combine heterogeneous statistical sources to automatically generate populations that best satisfy all available marginals; deviations in the resulting cross-distributions may highlight inconsistencies in public statistics, providing a principled approach to data validation. 
\bibliographystyle{splncs04}
\bibliography{biblio}
\end{sloppypar}
\end{document}